\documentclass[journal,twoside,web]{ieeecolor}
\usepackage[nocompress]{cite}
\usepackage{xcolor}
\definecolor{darkblue}{RGB}{88, 105, 187}
\definecolor{lobored}{rgb}{0.6,0,0}

\makeatletter
\let\NAT@parse\undefined
\makeatother
\usepackage[
    colorlinks=true,
    linktoc=all,
    linkcolor=darkblue,
    citecolor=lobored,
    urlcolor=lobored
]{hyperref}
\usepackage{orcidlink}
\usepackage{algorithm,algorithmic}
\usepackage{amsmath,amssymb,amsfonts,bm}
\DeclareMathOperator{\Proj}{Proj}

\usepackage{textcomp}
\usepackage{graphicx}
\usepackage{epsfig} 
\usepackage{datetime2}
\usepackage{generic}
\usepackage{lcsys}
\newtheorem{theorem}{Theorem}
\newtheorem{lemma}{Lemma}
\newtheorem{definition}{Definition}
\newtheorem{assumption}{Assumption}
\newtheorem{remark}{Remark}
\newtheorem{corollary}{Corollary}
\makeatletter
\def\@begintheorem#1#2{\@IEEEtmpitemindent\itemindent\topsep 0pt\rmfamily\trivlist%
    \item[\hskip\labelsep{\itshape #1\ #2:}]\itemindent\@IEEEtmpitemindent}
\def\@opargbegintheorem#1#2#3{\@IEEEtmpitemindent\itemindent\topsep 0pt\rmfamily\trivlist%
    \item[\hskip\labelsep{\itshape #1\ #2\ (#3):}]\itemindent\@IEEEtmpitemindent}

\makeatother
\begin{document}

\def\BibTeX{{\rm B\kern-.05em{\sc i\kern-.025em b}\kern-.08em
    T\kern-.1667em\lower.7ex\hbox{E}\kern-.125emX}}
\markboth{\journalname, VOL. XX, NO. XX, XXXX 2017}
{Author \MakeLowercase{\textit{et al.}}: Preparation of Papers for IEEE Control Systems Letters (August 2022)}

\title{
LoRA-RC: Reservoir Computing with \\ Low-Rank Adaptation
}

\author{
Wenbin Wan \orcidlink{0000-0002-4920-2215}, Member, IEEE
\thanks{
This work was supported in part by the University of New Mexico under SOE faculty startup funding and NM SPARK Scholars Program.}
\thanks{Wenbin Wan is with the Department of Mechanical Engineering, University of New Mexico, Albuquerque, NM 87131, USA. \texttt{wwan@unm.edu}}
}
\maketitle
\thispagestyle{empty}
\begin{abstract}
Reservoir computing (RC) trains only a linear readout over a fixed recurrent layer, making it fast and data-efficient for online prediction. However, a static reservoir degrades under system drift, readout-only adaptation is then insufficient, and unconstrained reservoir adaptation can destroy the echo-state and incremental stability properties that make RC reliable. This paper proposes LoRA-RC, which adapts the recurrent matrix through a low-rank correction driven by streaming prediction errors. The base reservoir and adaptation bases are fixed offline; a small core matrix is adapted online, projected onto a spectral-norm ball, and low-pass filtered at each step. The projection guarantees that every applied recurrent matrix remains within a certified contraction set, and an incremental input-to-state stability bound is established for the reservoir along each online adaptation path, with path-independent rate and gain. On a Lorenz system with an abrupt parameter drift, LoRA-RC cuts post-drift prediction error by 56\% versus a fixed RC and 51\% versus readout-only adaptation; ablations over 20 seeds show that removing the projection inflates this error by more than a factor of 40.
\end{abstract}

\begin{IEEEkeywords}
Fault tolerant systems, Adaptive systems, Neural networks, Stability of nonlinear systems.
\end{IEEEkeywords}

\section{Introduction}

\IEEEPARstart{L}{earning-enabled} predictors are increasingly used in cyber-physical systems (CPS) when first-principles models are incomplete or unavailable. For safety-critical operation, however, accurate nominal prediction is not enough. A deployable predictor must have a well-behaved internal state and must respond in a controlled way to sensor perturbations. These requirements connect learning-enabled prediction to classical control notions such as input-to-state stability, incremental stability, robustness, and certifiable operation under uncertainty~\cite{sontag1989smooth,jiang2001discreteISS,angeli2002incremental,mayne2000mpc}. Online adaptation is often necessary in deployed CPS. A predictor trained offline may work well on nominal data, but payload, contact conditions, plant parameters, and sensing channels can drift, making a static learned predictor unreliable even after successful validation. Fully training a recurrent neural network online could increase flexibility, but it is difficult to certify during operation. Updating many recurrent parameters can introduce unstable hidden dynamics, uncontrolled sensitivity to corrupted measurements, and failure modes that are hard to diagnose.

Reservoir computing (RC) offers a useful middle ground. In an echo state network (ESN), a high-dimensional recurrent reservoir encodes temporal information, while only a simple readout is trained \cite{jaeger2001echo,lukosevicius2009reservoir}. This separation makes training fast and data-efficient. It also gives the model enough structure for control-oriented stability analysis. Online readout adaptation has been used for nonlinear system identification \cite{jaeger2002adaptive,yang2019online}, and reservoir design is known to affect memory, nonlinearity, and prediction performance \cite{gallicchio2011architectural}. These properties make RC attractive for real-time prediction in CPS.

This simplicity also creates a limitation. A fixed reservoir can be too rigid when the system dynamics change. If the operating regime leaves the feature space represented by the nominal reservoir, readout-only adaptation may not be enough because the temporal memory itself remains unchanged. Readout adaptation can only reweight the features generated by the existing reservoir, whereas recurrent adaptation changes the reservoir memory map that generates those features.
\begin{figure}[t]
    \centering
    \includegraphics[width=1\linewidth]{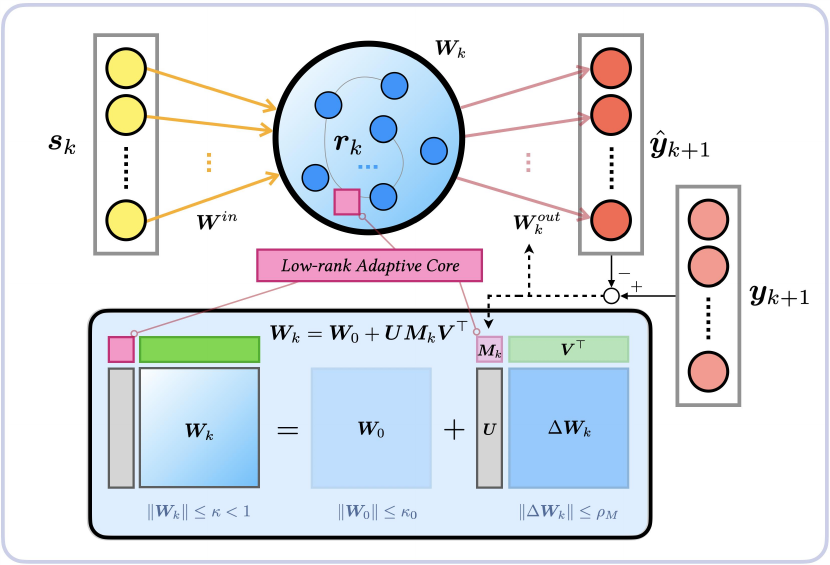}
    \caption{LoRA-RC architecture.
    \textit{Top}: the reservoir receives input $\bm{s}_k$ through $\bm{W}^{in}$, evolves hidden state $\bm{r}_k$ under recurrent matrix $\bm{W}_k$, and produces one-step-ahead prediction $\hat{\bm{y}}_{k+1}$ through time-varying readout $\bm{W}^{out}_k$; the residual $\bm{e}_{k+1}=\bm{y}_{k+1}-\hat{\bm{y}}_{k+1}$ drives the online updates.
    \textit{Bottom}: the recurrent matrix is decomposed as $\bm{W}_k = \bm{W}_0 + \bm{U}\bm{M}_k\bm{V}^\top$, where the fixed bases $\bm{U}$ and $\bm{V}$ are chosen offline and the small core $\bm{M}_k\in\mathbb{R}^{r\times r}$ is updated online.}
    \label{fig:arch}
    \vspace{-3mm}
\end{figure}
A natural solution is to adapt the recurrent matrix. Recent adaptive RC makes the recurrent connectivity responsive through excitatory--inhibitory control \cite{srinivasan2025boosting}, robot--reservoir timescale alignment \cite{ye2025reservoir}, and chaos-controlled reservoirs in living neurons \cite{kim2026living}, but attaches no stability certificate to the update.
The recurrent matrix determines how past information is stored, forgotten, and amplified. Unconstrained recurrent updates can destroy the echo-state property or make the predictor amplify corrupted inputs. The usual spectral-radius heuristic is not sufficient, since ESNs can fail to have the echo-state property even when the spectral radius is below one \cite{yildiz2012revisiting}. Input-dependent views of the echo-state property also show that stability depends on both the reservoir and the driving signal \cite{manjunath2013echo}. These results point to the same lesson: online reservoir adaptation needs a stability-aware update law instead of unconstrained recurrent tuning \cite{grigoryeva2018universal,dong2022asymptotic}.

Control-oriented recurrent-model research makes this gap more precise. Fixed RC models have been certified through incremental input-to-state stability ($\delta$ISS), and norm-type conditions have been used in observer and model predictive control designs \cite{bugliari2019mpc}, an idea extended by more recent echo-state predictive control \cite{williams2026reservoir,inoue2025reservoir}. RC has likewise been pushed toward deployment through online recalibration and distribution-free uncertainty quantification \cite{chen2022calibrated,neglia2026rescp} and model-free reservoir observers with online weight updates \cite{lu2017reservoir,tavakoli2026timedelay}, which adapt the readout or quantify uncertainty rather than certify an adapting recurrent matrix. ISS and $\delta$ISS certificates have since been established for gated recurrent units and broader offline-constrained RNN classes \cite{bonassi2021gru,damico2024incremental,bonassi2024nmpc}; regional Lyapunov conditions have extended these results to nonlinear model predictive control \cite{labella2025regional}. These results place recurrent learning models within the language of control theory. Yet they mainly certify fixed, offline-trained, or offline-constrained recurrent models. They do not provide an online reservoir adaptation rule that changes the recurrent memory while keeping a stability certificate invariant after every update.
This paper addresses this gap by developing LoRA-RC, a stability-preserving low-rank adaptive reservoir architecture. A natural starting point is low-rank adaptation (LoRA), popularized as a parameter-efficient fine-tuning technique for large language models~\cite{hu2022lora}, which freezes a large base matrix and modifies it through a small trainable correction of the form
$\bm{W}_{\mathrm{new}}=\bm{W}_0+\bm{A}\bm{B}^{\top}$, where $\bm{A}\bm{B}^{\top}$ has rank much smaller than the dimension of $\bm{W}_0$. Recent LoRA variants further improve efficiency by adapting rank allocation, preserving prior knowledge, or routing low-rank updates through structured mechanisms~\cite{zhang2023adalora,zou2025flylora,luo2026keeplora}. These methods show the value of low-dimensional parameter adaptation, but they mainly target static or offline-trained models.

A reservoir predictor raises a different issue. Its recurrent matrix is not only a parameter map. It is the memory map of a dynamical system. If a LoRA-style correction is applied directly during streaming deployment, two difficulties arise. First, updating both low-rank factors online makes the recurrent dynamics harder to certify, because the adaptation directions themselves change during operation. Second, standard low-rank adaptation does not provide a mechanism to keep the updated recurrent matrix inside a certified contraction set, for example $\|\bm{W}_{\mathrm{new}}\|<1$, after every online update. As a result, an unconstrained low-rank correction can improve local prediction loss while degrading the stability property required for trustworthy reservoir prediction.

LoRA-RC addresses this issue by separating offline adaptation directions from the online adaptive variable. The recurrent matrix is decomposed as $\bm{W}_k=\bm{W}_0+\bm{U}\bm{M}_k\bm{V}^{\top}$, where $\bm{W}_0$ is a certified base reservoir, $\bm{U}$ and $\bm{V}$ are fixed bases chosen offline, and the small core $\bm{M}_k$ is updated online. After each gradient step, the core is projected onto a spectral-norm set, and a first-order filter generates the applied correction. This construction preserves the tractable structure of reservoir computing while allowing the recurrent memory to adapt within a certified low-rank subspace. It changes low-rank adaptation from an unconstrained parameter update into a certified recurrent-memory update. In particular, the projection keeps each applied recurrent matrix inside a contraction set, while the filter controls the step-to-step variation of the online reservoir. As a result, the time-varying reservoir can adapt its memory under streaming prediction errors while preserving an incremental input-to-state stability certificate that holds uniformly over each online adaptation path, with rate and gain that do not depend on which path the update law takes. The architecture is shown in Fig.~\ref{fig:arch} and described in Section~\ref{sec:arch}. This work makes three contributions.
\begin{itemize}
    \item A low-rank adaptive reservoir computing architecture is introduced whose recurrent matrix is decomposed into a certified base reservoir and a projected and filtered online correction. This structure separates offline selection of adaptation directions from online updating of a small core matrix.
    \item An online update law for the readout and the low-rank core matrix is derived using projected gradient descent and first-order filtering. The update changes the reservoir memory while keeping the applied recurrent matrix inside the certified contraction set.
    \item The resulting time-varying reservoir is proved to be uniformly $\delta$ISS along each online adaptation path, with path-independent contraction rate and input gain. A direct bound on the step-to-step variation of the applied recurrent matrix is also derived as a function of the filter coefficient, and the contraction rate is shown to carry through to the prediction layer.
\end{itemize}

\section{Preliminaries} \label{sec:notation}

\textit{Notation.}
The subscript $k$ denotes the discrete time index. Vectors are written in bold lowercase, e.g., $\bm{r}_k\in\mathbb{R}^n$; matrices are written in bold uppercase, e.g., $\bm{W}\in\mathbb{R}^{n\times n}$. The notation $\bm{A}^{\top}$ denotes the transpose of a matrix $\bm{A}$. The norm $\|\cdot\|$ denotes the Euclidean norm for vectors and the induced matrix 2-norm (spectral norm) for matrices; $\|\cdot\|_F$ denotes the Frobenius norm. For a sequence of vectors $\{\bm{s}_k\}_{k\geq 0}$, $\|\bm{s}\|_\infty=\sup_{k\geq 0}\|\bm{s}_k\|$. The symbol $\odot$ denotes the elementwise (Hadamard) product. The activation function $\sigma(\cdot)$ is applied componentwise; $\sigma'(\cdot)$ denotes its componentwise derivative. $\bm{I}$ denotes the identity matrix of appropriate size. For a set $\mathcal{M}$ and a matrix $\bm{Z}$, $\Proj_{\mathcal{M}}(\bm{Z})$ denotes the Frobenius-distance projection of $\bm{Z}$ onto $\mathcal{M}$.

\textit{Reservoir Computing (RC).}
Let $\bm{r}_k\in\mathbb R^n$ denote the reservoir state. A standard RC uses the leaky reservoir model
\begin{equation}
    \bm{r}_{k+1}=(1-\alpha)\bm{r}_k+\alpha \sigma(\bm{W}_k \bm{r}_k+\bm{W}^{in} \bm{s}_k),
    \label{eq:reservoir}
\end{equation}
where $\alpha\in(0,1]$ is the leaking rate, $\bm{W}_k\in\mathbb{R}^{n\times n}$ is the recurrent weight matrix, $\bm{W}^{in}\in\mathbb R^{n\times p}$ is the fixed input weight matrix, and $\sigma$ is a componentwise nonlinear activation. The following assumption on $\sigma$ is standard and is satisfied by $\tanh$ and unit-Lipschitz saturating activations.
\begin{assumption}[Lipschitz activation]
\label{ass:lipschitz}
The activation function $\sigma$ is componentwise Lipschitz with constant $L_\sigma\leq 1$.
\end{assumption}

The prediction is
$
    \hat{\bm{y}}_{k+1}=\bm{W}^{out}_k\bm{r}_{k+1},
$
where $\bm{W}^{out}_k\in\mathbb R^{q\times n}$ is the readout matrix.
A standard RC fixes $\bm{W}_k=\bm{W}_0$ and trains only the readout offline by solving the ridge regression problem
$
    \bm{W}^{out}_0 = \arg\min_{\bm{W}}\sum_k\|\bm{y}_{k+1}-\bm{W}\bm{r}_{k+1}\|^2+\lambda\|\bm{W}\|_F^2,
$
whose closed-form solution is
$\bm{W}^{out}_0=\bm{Y}\bm{R}^{\top}(\bm{R}\bm{R}^{\top}+\lambda \bm{I})^{-1},$
where the columns of $\bm{R}$ are $\bm{r}_{k+1}$, the columns of $\bm{Y}$ are $\bm{y}_{k+1}$, and $\lambda>0$ is the ridge regularization coefficient.

\textit{Problem Formulation.}
Consider a CPS that generates a measured signal sequence $\bm{s}_k\in\mathbb R^p$ and an output sequence $\bm{y}_k\in\mathbb R^q$.
The signal $\bm{s}_k$ may include measured inputs, past outputs, or exogenous variables.
The goal is one-step-ahead prediction,
$\hat{\bm{y}}_{k+1}=f_k(\bm{s}_0,\ldots,\bm{s}_k),$
where the function $f_k$ is allowed to update online. An adaptive predictor with two properties is sought. First, it should improve prediction when the nominal dynamics drift. Second, it should preserve internal stability during online adaptation.

\section{Low-Rank Adaptive Reservoir Computing (LoRA-RC) Architecture} \label{sec:arch}

The LoRA-RC architecture augments a stable base reservoir with a low-rank recurrent correction that is adapted online, while constraining the correction to a set in which the contraction property of the reservoir is preserved at every step. Specifically, LoRA-RC constrains the recurrent matrix to the form
$\bm{W}_k=\bm{W}_0+\bm{U}\bm{M}_k \bm{V}^{\top}$,
where $\bm{W}_0\in\mathbb R^{n\times n}$ is a certified base reservoir, $\bm{U},\bm{V}\in\mathbb R^{n\times r}$ are fixed orthonormal adaptation bases selected offline, $\bm{M}_k\in\mathbb R^{r\times r}$ is the applied core updated online, and $r\ll n$. To make the contraction property invariant under online updates, the applied core is restricted to a norm-bounded set, defined as follows.

\begin{definition}
\label{def:admissible}
For constants $0<\kappa_0<\kappa<1$ and $\rho_M=\kappa-\kappa_0$, the \emph{admissible set} is
$
    \mathcal M=\{\bm{M}\in\mathbb R^{r\times r}:\|\bm{M}\|\leq \rho_M\}
$.
A core $\bm{M}$ is called \emph{admissible} whenever $\bm{M}\in\mathcal M$.
\end{definition}

With the admissible set in place, the offline and online roles of the decomposition can be stated.
The base reservoir $\bm{W}_0$, the adaptation bases $\bm{U}$ and $\bm{V}$, and the admissible set $\mathcal{M}$ are all fixed offline. Keeping $\bm{U}$ and $\bm{V}$ fixed is what makes the certificate invariant by construction: if those two were also updated online, the method would reduce to a partially trained recurrent network with no closed-form stability guarantee. During online operation, only the small core matrix is updated, while the readout is updated separately. The online core update introduces an auxiliary variable $\hat{\bm{M}}_k\in\mathbb{R}^{r\times r}$, called the \emph{fast core}, which is the raw gradient iterate before filtering. After each gradient step, the tentative fast-core iterate is projected onto the admissible set $\mathcal{M}$, which, under the orthonormal-basis condition, keeps every applied matrix $\bm{W}_k$ inside a certified contraction set at every step. A first-order low-pass filter then maps $\hat{\bm{M}}_k$ to the applied core $\bm{M}_k$ used in $\bm{W}_k$, decoupling the learning rate from the rate at which the reservoir memory changes.

The structural conditions on the offline components and the initialization of the online cores are collected in the following three assumptions, which are used throughout the remainder of the paper.

\begin{assumption}
\label{ass:base}
The base reservoir is stable with $\|\bm{W}_0\|\leq \kappa_0$.
\end{assumption}

\begin{assumption}
\label{ass:init}
The initial cores satisfy $\hat{\bm{M}}_0,\bm{M}_0\in\mathcal M$.
\end{assumption}

\begin{assumption}
\label{ass:bases}
The input matrix $\bm{W}^{in}$ is bounded, and the bases $\bm{U},\bm{V}\in\mathbb{R}^{n\times r}$ are orthonormal, $\bm{U}^{\top}\bm{U}=\bm{V}^{\top}\bm{V}=\bm{I}$.
\end{assumption}

\begin{algorithm}[t]
\footnotesize
\caption{LoRA-RC Offline Design}
\label{alg:offline}
\begin{algorithmic}[1]
\REQUIRE Training data $\mathcal{D}_{\rm tr}=\{(\bm{s}_k,\bm{y}_{k+1})\}$; hyperparameters $\kappa_0,\kappa,r,\lambda$; optional drift regimes $\mathcal{D}^{(1)},\ldots,\mathcal{D}^{(N_b)}$ with regularization weight $\lambda_W$
\ENSURE $\bm{W}_0,\bm{U},\bm{V},\bm{W}^{in},\bm{W}^{out}_0,\hat{\bm{M}}_0,\bm{M}_0$
\STATE $\rho_M \leftarrow \kappa-\kappa_0$, $\hat{\bm{M}}_0\leftarrow\bm{0}$, $\bm{M}_0\leftarrow\bm{0}$
\STATE Draw a sparse random matrix $\bar{\bm{W}}$ \STATE $\bm{W}_0 \leftarrow (\kappa_0/\|\bar{\bm{W}}\|)\bar{\bm{W}}$
\STATE Draw a random input matrix $\bm{W}^{in}$
\STATE Run reservoir \eqref{eq:reservoir} on $\mathcal{D}_{\rm tr}$
\STATE Train $\bm{W}^{out}_0$ by ridge regression
\IF{drift regimes are available}
    \FOR{$j=1,\ldots,N_b$}
        \STATE Solve the constrained batch correction problem for $\Delta\bm{W}^{(j)}$ by projected gradient, maintaining $\|\bm{W}_0+\Delta\bm{W}^{(j)}\|\leq\kappa$
    \ENDFOR
    \STATE $\overline{\Delta\bm{W}} \leftarrow N_b^{-1}\sum_j \Delta\bm{W}^{(j)}$
    \STATE Set $\bm{U},\bm{V}\in\mathbb{R}^{n\times r}$ to the leading $r$ left/right singular vectors of $\overline{\Delta\bm{W}}$
\ELSE
    \STATE Draw random orthonormal $\bm{U},\bm{V}\in\mathbb{R}^{n\times r}$
\ENDIF
\end{algorithmic}
\end{algorithm}

\begin{remark}
\label{rem:assumptions}
Assumptions~\ref{ass:base}--\ref{ass:bases} are enforced by the offline design rather than imposed on the data: spectral normalization sets $\|\bm{W}_0\|=\kappa_0$, the zero initialization $\hat{\bm{M}}_0=\bm{M}_0=\bm{0}\in\mathcal{M}$ ensures initial admissibility, and $\bm{U},\bm{V}$ are taken as singular vectors (or random orthonormal matrices) and held fixed together with $\bm{W}^{in}$.
\end{remark}

\subsection{Offline Initialization}

The offline stage uses nominal training data $\mathcal D_{\rm tr}=\{(\bm{s}_k,\bm{y}_{k+1})\}_{k=0}^{T-1}$.
First, draw a sparse random matrix $\bar{\bm{W}}$ and a random input matrix $\bm{W}^{in}$, then set
$\bm{W}_0=\frac{\kappa_0}{\|\bar{\bm{W}}\|}\bar{\bm{W}}$.
The remaining offline step is to choose the adaptation bases $\bm{U}$ and $\bm{V}$, which define the subspace in which the reservoir can be corrected online.
Writing $\bm{U}=[\bm{u}_1,\ldots,\bm{u}_r]$ and $\bm{V}=[\bm{v}_1,\ldots,\bm{v}_r]$, the correction expands as $\bm{U}\bm{M}_k\bm{V}^{\top}=\sum_{i,j}(\bm{M}_k)_{ij}\,\bm{u}_i\bm{v}_j^{\top}$: $\bm{V}^{\top}\bm{r}_k$ reads the state along the input-side directions, $\bm{M}_k$ recombines them, and $\bm{U}$ writes the result along the output-side directions. The column space of $\bm{U}\bm{M}_k\bm{V}^{\top}$ therefore lies in $\mathrm{span}(\bm{U})$ and its row space in $\mathrm{span}(\bm{V})$, regardless of $\bm{M}_k$.
By default, $\bm{U},\bm{V}\in\mathbb{R}^{n\times r}$ are random orthonormal matrices, where $r$ is the retained adaptation rank; this requires no prior knowledge of the drift.
If representative drift regimes $\mathcal D^{(1)},\ldots,\mathcal D^{(N_b)}$ are available offline, a data-informed alternative replaces random basis selection.
For each regime $j$, compute a stable batch reservoir correction
$\Delta\bm{W}^{(j)}=\arg\min_{\Delta\bm{W}}\;
    \sum_k \bigl\|\bm{y}_{k+1}^{(j)}
         -\hat{\bm{y}}_{k+1}^{(j)}  \bigr\|^2
    +\lambda_W\|\Delta\bm{W}\|_F^2$
subject to $\|\bm{W}_0+\Delta\bm{W}\|\leq\kappa$, where $\lambda_W$ is the regularization weight and the predictions $\hat{\bm{y}}^{(j)}_{k+1}$ depend on $\bm{W}_0+\Delta\bm{W}$ that is used as the recurrent matrix in \eqref{eq:reservoir}, generating states $\bm{r}^{(j)}_{k+1}$ over regime $j$, which are mapped to outputs by a ridge readout refit on that regime at each gradient step.
The constraint keeps the corrected reservoir inside the certified contraction set, so the directions are fitted from a reservoir with the echo-state property.

The problem is solved approximately by projected gradient steps.
The bases $\bm{U},\bm{V}$ are then taken as the leading $r$ left and right singular vectors of the average correction $\overline{\Delta\bm{W}}=N_b^{-1}\sum_{j=1}^{N_b}\Delta\bm{W}^{(j)}$, concentrating the adaptation budget on the directions that were most useful across the observed drift scenarios.
The fast and applied cores are initialized as $\hat{\bm{M}}_0=\bm{M}_0=\bm{0}$.
During deployment $\bm{U},\bm{V}$ remain fixed and only $\bm{W}^{out}_k$, $\hat{\bm{M}}_k$, and $\bm{M}_k$ are updated; Algorithm~\ref{alg:offline} summarizes the offline design.

\subsection{Online Prediction and Update}

\begin{algorithm}[t]
\footnotesize
\caption{LoRA-RC Online Update}
\label{alg:online}
\begin{algorithmic}[1]
\REQUIRE Fixed $\bm{W}_0,\bm{U},\bm{V},\bm{W}^{in},\rho_M,\bar R,\alpha,\eta_R,\eta_M,\beta,\epsilon,\lambda_M$; state $\bm{W}^{out}_k,\hat{\bm{M}}_k,\bm{M}_k,\bm{r}_k$; input $\bm{s}_k$
\ENSURE Updated $\bm{W}^{out}_{k+1},\hat{\bm{M}}_{k+1},\bm{M}_{k+1},\bm{r}_{k+1},\bm{W}_{k+1}$
\STATE $\bm{W}_k \leftarrow \bm{W}_0+\bm{U}\bm{M}_k\bm{V}^{\top}$
\STATE $\bm{a}_k \leftarrow \bm{W}_k\bm{r}_k+\bm{W}^{in}\bm{s}_k$
\STATE $\bm{r}_{k+1} \leftarrow (1-\alpha)\bm{r}_k+\alpha\sigma(\bm{a}_k)$
\STATE $\hat{\bm{y}}_{k+1} \leftarrow \bm{W}^{out}_k\bm{r}_{k+1}$
\STATE Observe $\bm{y}_{k+1}$; \quad $\bm{e}_{k+1} \leftarrow \bm{y}_{k+1}-\hat{\bm{y}}_{k+1}$
\STATE $\bm{W}^{out}_{k+1} \leftarrow \Proj_{\bar R}\!\bigl(\bm{W}^{out}_k + \eta_R\,\bm{e}_{k+1}\bm{r}_{k+1}^{\top}/(\epsilon+\|\bm{r}_{k+1}\|^2)\bigr)$
\STATE $\bm{G}^M_k \leftarrow \bm{U}^{\top}(\bm{\delta}^a_k\bm{r}_k^{\top})\bm{V}+\lambda_M\hat{\bm{M}}_k$
\STATE $\hat{\bm{M}}_{k+1} \leftarrow \Proj_{\mathcal{M}}\!\left(\hat{\bm{M}}_k - \eta_M\bm{G}^M_k\right)$
\STATE $\bm{M}_{k+1} \leftarrow (1-\beta)\bm{M}_k+\beta\hat{\bm{M}}_{k+1}$
\STATE $\bm{W}_{k+1} \leftarrow \bm{W}_0+\bm{U}\bm{M}_{k+1}\bm{V}^{\top}$
\end{algorithmic}
\end{algorithm}

Each online step has three ordered operations: predict with the current applied core $\bm{M}_k$, observe the true output to form the residual, and update the readout and small core matrix.
After predicting $\hat{\bm{y}}_{k+1}$, the true output $\bm{y}_{k+1}$ becomes available.
Define the residual
$\bm{e}_{k+1}=\bm{y}_{k+1}-\hat{\bm{y}}_{k+1}$.
The readout update is a normalized stochastic-gradient step, projected onto a spectral-norm ball of radius $\bar R>0$,
\begin{equation}
    \bm{W}^{out}_{k+1}=\Proj_{\bar R} \ (\bm{W}^{out}_k+\eta_R\frac{\bm{e}_{k+1}\bm{r}_{k+1}^{\top}}{\epsilon+\|\bm{r}_{k+1}\|^2}),
    \label{eq:readout_update}
\end{equation}
where $\eta_R>0$ is the readout step size, $\epsilon>0$ is a small constant that prevents division by a vanishing reservoir-state norm, and $\Proj_{\bar R}(\cdot)$ is the Frobenius-distance projection introduced in Section~\ref{sec:notation}. Initializing $\bm{W}^{out}_0$ with $\|\bm{W}^{out}_0\|\leq\bar R$ then gives $\|\bm{W}^{out}_k\|\leq\bar R$ for all $k\geq 0$, since every later iterate is an output of $\Proj_{\bar R}$ and therefore lies in the ball.
The reservoir update is applied at each step alongside the readout update.
Since $\bm{U}$ and $\bm{V}$ are fixed, the gradient only needs to be computed with respect to a small $r\times r$ core rather than all $n^2$ recurrent entries.

The adaptation signal for the core is derived from the one-step residual loss $J_k=\frac{1}{2}\|\bm{y}_{k+1}-\hat{\bm{y}}_{k+1}\|^2$ by propagating the residual $\bm{e}_{k+1}$ back through the readout and the activation. Define the pre-activation of the reservoir at time $k$ as $\bm{a}_k=\bm{W}_k \bm{r}_k+\bm{W}^{in} \bm{s}_k$.
The readout $\bm{W}^{out}_k$ used below is the one that produced the prediction at time $k$, before any update at that same step.
The backpropagated local signal is
$\bm{\delta}^a_k=-\alpha\left(\sigma'(\bm{a}_k)\odot (\bm{W}^{out}_k)^{\top} \bm{e}_{k+1}\right)$,
which is the gradient of $J_k$ with respect to $\bm{a}_k$ scaled by the leak rate $\alpha$.
Projecting the outer product $\bm{\delta}^a_k\bm{r}_k^{\top}$ onto the adaptation subspace and adding a leakage term gives
the core adaptation signal,
\begin{equation}
    \bm{G}^M_k=\bm{U}^{\top}\left(\bm{\delta}^a_k \bm{r}_k^{\top}\right)\bm{V}+\lambda_M \hat{\bm{M}}_k,
    \label{eq:gradM}
\end{equation}
where $\lambda_M\geq 0$ is a leakage coefficient and $\lambda_M\hat{\bm{M}}_k$ is the corresponding leakage term that discourages drift of the fast adaptive core.
This update can be interpreted as a projected-gradient step on an instantaneous surrogate loss with respect to the applied core, followed by filtering; it does not differentiate through the filter dynamics over multiple future steps.

The fast core is then updated by projected gradient descent onto $\mathcal{M}$ with step size $\eta_M>0$,
\begin{equation}
    \hat{\bm{M}}_{k+1}=\Proj_{\mathcal M}\left(\hat{\bm{M}}_k-\eta_M \bm{G}^M_k\right).
    \label{eq:projected_update}
\end{equation}

\vspace{-2mm}

\begin{remark}
\label{rem:projection_wellposed}
Since $\rho_M>0$, the admissible set $\mathcal M$ is a closed, bounded, convex, and nonempty spectral-norm ball in $\mathbb R^{r\times r}$; these properties guarantee that the Frobenius-distance projection $\Proj_{\mathcal M}(\cdot)$ in \eqref{eq:projected_update} is well-defined and single-valued.
\end{remark}

The applied core is then generated by a first-order low-pass filter with coefficient $\beta\in(0,1]$,
\begin{equation}
    \bm{M}_{k+1}=(1-\beta)\bm{M}_k+\beta\hat{\bm{M}}_{k+1},\qquad 0<\beta\leq 1.
    \label{eq:filtered_update}
\end{equation}
Unrolling the recursion (with $\hat{\bm{M}}_0= \bm{M}_0=\bm{0}$) gives $\bm{M}_k=\beta\sum_{\ell=1}^{k}(1-\beta)^{k-\ell}\,\hat{\bm{M}}_\ell$, so the applied core at step $k$ is an exponentially weighted average of all past fast-core values, with weights on older values decaying as $(1-\beta)^{k-\ell}$. The filter acts entrywise on $\bm{M}_k$, applying the same scalar recursion to each $(i,j)$ entry.
Smaller $\beta$ gives slower recurrent adaptation with smaller step-to-step variation in $\bm{W}_k$.
For the spectral-norm ball $\mathcal M=\{\bm{M}:\|\bm{M}\|\leq\rho_M\}$, the projection has a closed form via singular-value clipping.
Let $\tilde{\bm{M}}=\hat{\bm{M}}_k-\eta_M\bm{G}^M_k$ denote the unprojected iterate inside $\Proj_{\mathcal M}(\cdot)$ in \eqref{eq:projected_update}, and let $\tilde{\bm{M}}=\bm{P}\bm{\Sigma}\bm{Q}^{\top}$ be its singular value decomposition, where $\bm{\Sigma}$ is the diagonal matrix of singular values $\bm{\Sigma}_{ii}\geq 0$.
Since the spectral norm $\|\tilde{\bm{M}}\|$ equals the largest singular value, the Frobenius-closest matrix with spectral norm at most $\rho_M$ is obtained by capping each singular value at $\rho_M$, i.e.,
$\Proj_{\mathcal M}(\tilde{\bm{M}}) = \bm{P}\bar{\bm{\Sigma}}\bm{Q}^{\top}, \bar{\bm{\Sigma}}_{ii} = \min\{\bm{\Sigma}_{ii},\rho_M\}.$
Algorithm~\ref{alg:online} summarizes the online prediction and update procedure.

\begin{table*}[t]
\caption{Lorenz drift experiments: (a) method comparison, (b) component ablation, and (c) hyperparameter sensitivity}
\label{tab:results}
\begin{center}
\vspace{-4mm}
\footnotesize
\setlength{\tabcolsep}{3.5pt}
\begin{minipage}[t]{0.30\textwidth}
\centering
(a) Method comparison (single run)\\[3pt]
\begin{tabular}{lccc}
\hline
       & \multicolumn{2}{c}{RMSE} & LoRA-RC \\
\cline{2-3}
Method & Pre & Post & reduction  \\
\hline
Fixed RC            & 0.313 & 1.372 & 56\% \\
Adaptive RC         & 0.309 & 1.211 & 51\% \\
\textbf{LoRA-RC}    & \textbf{0.235} & \textbf{0.598} & --- \\
\hline
\end{tabular}
\end{minipage}
\hfill
\begin{minipage}[t]{0.36\textwidth}
\centering
(b) Ablation, 20 seeds (mean $\pm$ std)\\[3pt]
\begin{tabular}{lcc}
\hline
Condition & Pre-drift & Post-drift \\
\hline
No projection            & 27.07$\pm$8.16 & 26.51$\pm$8.13 \\
No filter                & 3.90$\pm$0.75  & 3.53$\pm$0.68  \\
LoRA-RC ($r{=}1$)        & 0.264$\pm$0.113 & 1.112$\pm$0.183 \\
\textbf{LoRA-RC ($r{=}5$)}        & \textbf{0.237$\pm$0.055} & \textbf{0.629$\pm$0.156} \\
\hline
\end{tabular}
\end{minipage}
\hfill
\begin{minipage}[t]{0.30\textwidth}
\centering
(c) Sensitivity, 20 seeds\\[3pt]
\begin{tabular}{lcc}
\hline
Parameter & Range swept & Post-drift RMSE \\
\hline
$\rho_M$    & $0.05$--$0.35$ & $0.624$--$0.865$ \\
$\beta$     & $0.02$--$1.0$  & $0.629$--$3.532$ \\
$\eta_M$    & $0.01$--$0.16$ & $0.597$--$0.686$ \\
$\lambda_M$ & $0$--$240$     & $0.597$--$0.917$ \\
\hline
\end{tabular}
\end{minipage}
\vspace{-5mm}
\end{center}
\end{table*}
\subsection{Stability Analysis}

The analysis uses the standard incremental-stability viewpoint \cite{angeli2002incremental}.
Prior results certify a fixed or offline-trained system once, via contraction for ESNs \cite{bugliari2019mpc}, LMI conditions for broader RNN classes \cite{damico2024incremental}, or an incremental-stability hypothesis for controller design \cite{bonassi2024nmpc}.
None of these certificates is invariant under online changes of the recurrent matrix. The contribution is therefore not just an integration, but an update law that keeps every applied $\bm{W}_k$ in a norm-bounded family, extending the certificate from a fixed weight to the whole online sequence $\{\bm{W}_k\}$.
That invariance yields an explicit rate and input gain, uniform over core sequences, via a direct trajectory argument rather than a Lyapunov construction.

\begin{assumption}
\label{ass:update}
At each time step, recurrent adaptation is either frozen, so $\hat{\bm{M}}_{k+1}=\hat{\bm{M}}_k$ and $\bm{M}_{k+1}=\bm{M}_k$, or it is performed by \eqref{eq:projected_update}--\eqref{eq:filtered_update}.
\end{assumption}

\begin{lemma}
\label{lem:admissible_invariance}
Under Assumptions~\ref{ass:init}--\ref{ass:update}, $\hat{\bm{M}}_k\in\mathcal M$ and $\bm{M}_k\in\mathcal M$ for all $k\geq 0$.
\end{lemma}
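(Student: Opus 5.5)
The proof will go by induction on $k$, using two facts: $\mathcal M$ is convex, and every output of $\Proj_{\mathcal M}$ lies in $\mathcal M$.

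\textbf{Base case.} Assumption~\ref{ass:init} gives $\hat{\bm{M}}_0,\bm{M}_0\in\mathcal M$ directly.

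\textbf{Inductive step.} Suppose $\hat{\bm{M}}_k,\bm{M}_k\in\mathcal M$. By Assumption~\ref{ass:update}, exactly one of two cases holds at step $k$.

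In the frozen case, $\hat{\bm{M}}_{k+1}=\hat{\bm{M}}_k$ and $\bm{M}_{k+1}=\bm{M}_k$, so admissibility carries over unchanged.

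In the active case, the fast core is given by \eqref{eq:projected_update}. By Remark~\ref{rem:projection_wellposed}, $\Proj_{\mathcal M}$ is well defined and maps into $\mathcal M$. One can also see this from the closed-form singular-value clipping: the clipped matrix has largest singular value at most $\rho_M$. Either way, $\hat{\bm{M}}_{k+1}\in\mathcal M$. The applied core from \eqref{eq:filtered_update} then satisfies, by the triangle inequality and homogeneity of the spectral norm,
\[
\|\bm{M}_{k+1}\|\le(1-\beta)\|\bm{M}_k\|+\beta\|\hat{\bm{M}}_{k+1}\|\le(1-\beta)\rho_M+\beta\rho_M=\rho_M .
\]
This uses $\beta\in(0,1]$, so both weights are nonnegative and sum to one. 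Equivalently, $\bm{M}_{k+1}$ is a convex combination of two points of the convex ball $\mathcal M$.

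\textbf{Order of the argument.} The fast-core claim must be established before the applied-core claim within each step, because the filter uses the new iterate $\hat{\bm{M}}_{k+1}$, not $\hat{\bm{M}}_k$. The induction hypothesis therefore has to carry both cores jointly.

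\textbf{Remaining points.} I do not expect a substantive obstacle. The only point needing care is that the projection lands in $\mathcal M$ whatever the gradient $\bm{G}^M_k$ is; the bases, readout and leakage term play no role. Mixing frozen and active steps in any order is harmless, since each case preserves admissibility on its own.
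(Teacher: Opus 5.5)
Your proof is correct and follows the paper's argument. Both use induction with the base case from Assumption~\ref{ass:init}, treat the frozen case as trivial, get $\hat{\bm{M}}_{k+1}\in\mathcal M$ from the projection, and get $\bm{M}_{k+1}\in\mathcal M$ as a convex combination of two admissible cores; your triangle-inequality bound simply writes out the convexity of the spectral-norm ball.
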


\begin{proof}
The base case $k=0$ is Assumption~\ref{ass:init}.
Assuming admissibility at step $k$, the freeze case trivially preserves it, while in the update case \eqref{eq:projected_update}--\eqref{eq:filtered_update} yield $\hat{\bm{M}}_{k+1}\in\mathcal M$ (by definition of $\Proj_{\mathcal M}$) and $\bm{M}_{k+1}\in\mathcal M$ (as a convex combination of $\bm{M}_k$ and $\hat{\bm{M}}_{k+1}$, both in $\mathcal M$).
The result follows by induction.
\end{proof}

\begin{theorem}
\label{thm:uniform_diss}
Under Assumptions~\ref{ass:lipschitz}--\ref{ass:update}, let $\{\bm{M}_k\}_{k\geq 0}$ be the applied-core sequence. For any two reservoir trajectories driven by this same sequence $\{\bm{W}_k\}$, from arbitrary initial states $\bm{r}_0,\bm{r}_0'$ and under arbitrary inputs $\bm{s},\bm{s}'$, the reservoir state in \eqref{eq:reservoir} satisfies
\vspace{-3mm}
\begin{equation}
\|\bm{r}_k-\bm{r}_k'\|
    \leq a^k\|\bm{r}_0-\bm{r}_0'\| 
    \quad +\frac{\alpha\|\bm{W}^{in}\|}{1-a}\|\bm{s}-\bm{s}'\|_\infty,
\label{eq:diss_bound}
\vspace{-1mm}
\end{equation}
where $a=(1-\alpha)+\alpha\kappa<1$.
Thus the reservoir is incrementally input-to-state stable, with rate $a$ and gain $\alpha\|\bm{W}^{in}\|/(1-a)$ that are uniform over all admissible applied-core sequences $\{\bm{M}_k\}\subset\mathcal{M}$.
\end{theorem}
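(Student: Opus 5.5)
The proof is a one-step contraction estimate followed by unrolling. Uniformity is inherited entirely from a uniform norm bound on every applied recurrent matrix.

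\emph{Step 1: uniform bound on the applied matrices.} By Lemma~\ref{lem:admissible_invariance}, every applied core satisfies $\bm{M}_k\in\mathcal M$, i.e., $\|\bm{M}_k\|\le\rho_M$. Because $\bm{U}$ and $\bm{V}$ have orthonormal columns (Assumption~\ref{ass:bases}), $\|\bm{U}\|=\|\bm{V}\|=1$, so $\|\bm{U}\bm{M}_k\bm{V}^\top\|\le\|\bm{M}_k\|$. With Assumption~\ref{ass:base} and the triangle inequality,
$$\|\bm{W}_k\|\le\|\bm{W}_0\|+\|\bm{M}_k\|\le\kappa_0+\rho_M=\kappa\quad\text{for all } k\ge0.$$
This bound depends only on $\kappa$. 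It does not depend on the particular path $\{\bm{M}_k\}$, which is the source of uniformity in the theorem.

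\emph{Step 2: one-step incremental bound.} Fix $k$ and let $\bm{\Delta}_k=\bm{r}_k-\bm{r}_k'$. Subtract the two copies of \eqref{eq:reservoir}, which share the same $\bm{W}_k$. Because $\sigma$ acts componentwise and is $L_\sigma$-Lipschitz in each component with $L_\sigma\le1$ (Assumption~\ref{ass:lipschitz}), it satisfies $\|\sigma(\bm{x})-\sigma(\bm{x}')\|\le\|\bm{x}-\bm{x}'\|$ in the Euclidean norm. Therefore
$$\|\bm{\Delta}_{k+1}\|\le(1-\alpha)\|\bm{\Delta}_k\|+\alpha\bigl(\|\bm{W}_k\|\,\|\bm{\Delta}_k\|+\|\bm{W}^{in}\|\,\|\bm{s}_k-\bm{s}_k'\|\bigr).$$
Applying Step 1 and bounding $\|\bm{s}_k-\bm{s}_k'\|\le\|\bm{s}-\bm{s}'\|_\infty$ gives
$$\|\bm{\Delta}_{k+1}\|\le a\|\bm{\Delta}_k\|+\alpha\|\bm{W}^{in}\|\,\|\bm{s}-\bm{s}'\|_\infty,\qquad a=(1-\alpha)+\alpha\kappa.$$
Since $\alpha\in(0,1]$ and $\kappa<1$, we have $a=1-\alpha(1-\kappa)<1$. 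Also $a\ge 1-\alpha\ge 0$.

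\emph{Step 3: unrolling.} A discrete comparison argument (induction on $k$) gives
$$\|\bm{\Delta}_k\|\le a^k\|\bm{\Delta}_0\|+\alpha\|\bm{W}^{in}\|\,\|\bm{s}-\bm{s}'\|_\infty\sum_{j=0}^{k-1}a^j.$$
Bounding the finite geometric sum by $1/(1-a)$ yields \eqref{eq:diss_bound}. The rate $a$ and the gain $\alpha\|\bm{W}^{in}\|/(1-a)$ involve only $\alpha$, $\kappa$ and $\|\bm{W}^{in}\|$, so both are uniform over admissible core sequences. The bound has the $\delta$ISS form in the sense of \cite{angeli2002incremental}, with a $\mathcal{KL}$ term $a^k(\cdot)$ and a linear gain.

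\emph{Main obstacle.} None of these steps is hard. The step needing most care is Step 1 together with the Lipschitz transfer in Step 2. In Step 1, one must check that orthonormality makes the low-rank correction norm-nonexpansive, so that the contraction bound for the time-varying matrix holds at every $k$ and not only at $k=0$. In Step 2, one must check that a componentwise Lipschitz constant $L_\sigma\le1$ gives a vector Lipschitz constant $\le1$ in the Euclidean norm. That holds because $\sum_i(\sigma(x_i)-\sigma(x_i'))^2\le\sum_i(x_i-x_i')^2$.
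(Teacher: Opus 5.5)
Your proposal is correct and follows the paper's proof essentially step for step. Lemma~\ref{lem:admissible_invariance} plus orthonormality of $\bm{U},\bm{V}$ gives $\|\bm{W}_k\|\le\kappa$ uniformly, the Lipschitz activation gives the one-step contraction with rate $a$, and unrolling with the geometric-sum bound yields \eqref{eq:diss_bound}; the only cosmetic differences are that you replace $\|\bm{s}_k-\bm{s}_k'\|$ by the sup norm before unrolling rather than after, and you use $\|\bm{U}\bm{M}_k\bm{V}^\top\|\le\|\bm{M}_k\|$ where the paper states equality.
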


\begin{proof}
By Lemma~\ref{lem:admissible_invariance}, $\bm{M}_k\in\mathcal M$ for all $k\geq 0$.
By Assumption~\ref{ass:bases}, $\bm{U}^{\top}\bm{U}=\bm{V}^{\top}\bm{V}=\bm{I}$, so $\|\bm{U}\bm{M}_k\bm{V}^{\top}\|=\|\bm{M}_k\|$.
Combined with Assumption~\ref{ass:base} ($\|\bm{W}_0\|\leq\kappa_0$) and Definition~\ref{def:admissible} ($\rho_M=\kappa-\kappa_0$),
$\|\bm{W}_k\|
\leq\|\bm{W}_0\|+\|\bm{U}\bm{M}_k\bm{V}^{\top}\| 
=\|\bm{W}_0\|+\|\bm{M}_k\| 
\leq\kappa_0+\rho_M=\kappa<1$
for all $k\geq 0$.
For two trajectories driven by the same $\{\bm{W}_k\}$, define
$$
\bm{z}_k=\bm{W}_k\bm{r}_k+\bm{W}^{in}\bm{s}_k,
\quad
\bm{z}_k'=\bm{W}_k\bm{r}_k'+\bm{W}^{in}\bm{s}_k'.
$$
Using \eqref{eq:reservoir}, Assumption~\ref{ass:lipschitz} ($\sigma$ is Lipschitz with $L_\sigma\leq 1$, so $\|\sigma(\bm{z}_k)-\sigma(\bm{z}_k')\|\leq\|\bm{z}_k-\bm{z}_k'\|$), and the triangle inequality,
\begin{align*}
\|\bm{r}_{k+1}-\bm{r}_{k+1}'\|
&\leq(1-\alpha)\|\bm{r}_k-\bm{r}_k'\| \quad +\alpha\|\bm{W}_k(\bm{r}_k-\bm{r}_k')\| \\
&\quad+\alpha\|\bm{W}^{in}(\bm{s}_k-\bm{s}_k')\|\\
&\leq\bigl((1-\alpha)+\alpha\kappa\bigr)\|\bm{r}_k-\bm{r}_k'\| \\
&\quad+\alpha\|\bm{W}^{in}\|\|\bm{s}_k-\bm{s}_k'\|.
\end{align*}
Setting $a=(1-\alpha)+\alpha\kappa<1$ and applying recursively yields
\vspace{-2mm}
$$
\|\bm{r}_k-\bm{r}_k'\|
\leq a^k\|\bm{r}_0-\bm{r}_0'\|
+\alpha\|\bm{W}^{in}\|\sum_{j=0}^{k-1}a^{k-1-j}\|\bm{s}_j-\bm{s}_j'\|.
\vspace{-2mm}
$$
Since $\sum_{j=0}^{k-1}a^{k-1-j}\leq(1-a)^{-1}$, the bound \eqref{eq:diss_bound} follows.
\end{proof}

\begin{corollary}
\label{cor:rate}
Under Assumptions~\ref{ass:init}--\ref{ass:update}, the applied recurrent matrix generated by Algorithm~\ref{alg:online} satisfies
$\|\bm{W}_{k+1}-\bm{W}_k\|\leq 2\beta\rho_M$.
\end{corollary}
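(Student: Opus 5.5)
The plan is to reduce the bound on $\bm{W}_{k+1}-\bm{W}_k$ to a bound on the core increment $\bm{M}_{k+1}-\bm{M}_k$, and then control that increment through the filter \eqref{eq:filtered_update} together with Lemma~\ref{lem:admissible_invariance}.

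First, since $\bm{W}_0$, $\bm{U}$ and $\bm{V}$ are fixed, the base reservoir cancels and
$$\bm{W}_{k+1}-\bm{W}_k=\bm{U}(\bm{M}_{k+1}-\bm{M}_k)\bm{V}^{\top}.$$
By Assumption~\ref{ass:bases} ($\bm{U}^{\top}\bm{U}=\bm{V}^{\top}\bm{V}=\bm{I}$), the same isometry argument used in the proof of Theorem~\ref{thm:uniform_diss} gives $\|\bm{U}\bm{X}\bm{V}^{\top}\|=\|\bm{X}\|$ for any $\bm{X}\in\mathbb{R}^{r\times r}$. Hence $\|\bm{W}_{k+1}-\bm{W}_k\|=\|\bm{M}_{k+1}-\bm{M}_k\|$.

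Next, split on the two cases allowed by Assumption~\ref{ass:update}.
\begin{itemize}
\item In the frozen case, $\bm{M}_{k+1}=\bm{M}_k$, so the difference is zero and the bound holds trivially.
\item In the update case, \eqref{eq:filtered_update} gives
$$\bm{M}_{k+1}-\bm{M}_k=\beta(\hat{\bm{M}}_{k+1}-\bm{M}_k).$$
By Lemma~\ref{lem:admissible_invariance}, both $\hat{\bm{M}}_{k+1}$ and $\bm{M}_k$ lie in $\mathcal M$. The triangle inequality then yields
$$\|\hat{\bm{M}}_{k+1}-\bm{M}_k\|\le\|\hat{\bm{M}}_{k+1}\|+\|\bm{M}_k\|\le 2\rho_M,$$
and therefore $\|\bm{W}_{k+1}-\bm{W}_k\|\le 2\beta\rho_M$.
\end{itemize}

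There is no substantive obstacle. The one point that needs care is justifying the isometry $\|\bm{U}\bm{X}\bm{V}^{\top}\|=\|\bm{X}\|$. To do this, note that $\bm{U}^{\top}\bm{U}=\bm{I}$ gives $\|\bm{U}\bm{y}\|=\|\bm{y}\|$ for every vector $\bm{y}$. Also, $\bm{V}^{\top}$ maps the unit sphere onto the unit ball in $\mathbb{R}^r$, taking unit vectors of $\mathrm{span}(\bm{V})$ onto the unit sphere of $\mathbb{R}^r$. Taking the supremum over unit vectors therefore gives the equality. For the corollary, the one-sided inequality $\|\bm{U}\bm{X}\bm{V}^{\top}\|\le\|\bm{X}\|$ already suffices.
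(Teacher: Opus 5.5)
Your proposal is correct and matches the paper's proof: you split into the frozen and update cases, use the filter identity $\bm{M}_{k+1}-\bm{M}_k=\beta(\hat{\bm{M}}_{k+1}-\bm{M}_k)$, apply Lemma~\ref{lem:admissible_invariance} with the triangle inequality to get $2\beta\rho_M$, and use the orthonormal-basis isometry to carry the bound over to $\bm{W}_{k+1}-\bm{W}_k$. Your explicit justification of that isometry, and your remark that the one-sided inequality $\|\bm{U}\bm{X}\bm{V}^{\top}\|\le\|\bm{X}\|$ is all the corollary needs, are sound details the paper leaves implicit.
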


\begin{proof}
If the recurrent update is frozen, $\bm{M}_{k+1}=\bm{M}_k$ and $\bm{W}_{k+1}=\bm{W}_k$.
Otherwise, \eqref{eq:filtered_update} gives $\bm{M}_{k+1}-\bm{M}_k=\beta(\hat{\bm{M}}_{k+1}-\bm{M}_k)$.
By Lemma~\ref{lem:admissible_invariance}, $\|\hat{\bm{M}}_{k+1}\|\leq\rho_M$ and $\|\bm{M}_k\|\leq\rho_M$, so $\|\bm{M}_{k+1}-\bm{M}_k\|\leq 2\beta\rho_M$.
Finally, by Assumption~\ref{ass:bases},
$\|\bm{W}_{k+1}-\bm{W}_k\|
=\|\bm{U}(\bm{M}_{k+1}-\bm{M}_k)\bm{V}^{\top}\| 
=\|\bm{M}_{k+1}-\bm{M}_k\|
\leq 2\beta\rho_M$.
\end{proof}

\begin{corollary}
\label{cor:prediction_sensitivity}
Under the hypotheses of Theorem~\ref{thm:uniform_diss}, consider two trajectories driven by the same $\{\bm{W}_k\}$ and sharing the readout sequence $\{\bm{W}^{out}_k\}$, with $\|\bm{W}^{out}_k\|\leq\bar R$ for all $k\geq 0$, which the projected readout update \eqref{eq:readout_update} enforces by construction. Then their predictions $\hat{\bm{y}}_{k+1}=\bm{W}^{out}_k\bm{r}_{k+1}$ satisfy
\begin{equation*}
    \|\hat{\bm{y}}_{k+1}-\hat{\bm{y}}_{k+1}'\|
    \leq \bar R\!\left(a^{k+1}\|\bm{r}_0-\bm{r}_0'\|+\frac{\alpha\|\bm{W}^{in}\|}{1-a}\|\bm{s}-\bm{s}'\|_\infty\right).
\end{equation*}
\end{corollary}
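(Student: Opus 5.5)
The plan is to reduce Corollary~\ref{cor:prediction_sensitivity} directly to Theorem~\ref{thm:uniform_diss} by one application of the readout bound. Because the two trajectories share the readout sequence, the prediction difference factors as
\[
\hat{\bm{y}}_{k+1}-\hat{\bm{y}}_{k+1}'=\bm{W}^{out}_k(\bm{r}_{k+1}-\bm{r}_{k+1}').
\]
Submultiplicativity of the induced 2-norm then gives $\|\hat{\bm{y}}_{k+1}-\hat{\bm{y}}_{k+1}'\|\leq\|\bm{W}^{out}_k\|\,\|\bm{r}_{k+1}-\bm{r}_{k+1}'\|$.

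Next I would justify $\|\bm{W}^{out}_k\|\leq\bar R$ for every $k$. This holds by hypothesis. It is also consistent with \eqref{eq:readout_update}: an initialization with $\|\bm{W}^{out}_0\|\leq\bar R$ and every later iterate being an output of $\Proj_{\bar R}$ keep all iterates in the ball, by induction as noted after \eqref{eq:readout_update}. One point needs care. The projection is a Frobenius-distance projection onto a spectral-norm ball, so it lands in the set $\{\|\cdot\|\leq\bar R\}$. By the same singular-value clipping argument used for $\Proj_{\mathcal M}$, its output has spectral norm at most $\bar R$, which is exactly what is needed.

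The main step is to apply \eqref{eq:diss_bound} from Theorem~\ref{thm:uniform_diss} at time index $k+1$ instead of $k$. This gives
\[
\|\bm{r}_{k+1}-\bm{r}_{k+1}'\|\leq a^{k+1}\|\bm{r}_0-\bm{r}_0'\|+\frac{\alpha\|\bm{W}^{in}\|}{1-a}\|\bm{s}-\bm{s}'\|_\infty.
\]
Here I must check that the theorem's hypotheses are met: both trajectories are driven by the same $\{\bm{W}_k\}$, which Lemma~\ref{lem:admissible_invariance} keeps admissible, and the inputs are arbitrary. Multiplying this by $\bar R$ yields the stated bound.

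I expect no genuine obstacle. The only subtle point is the time indexing: the prediction $\hat{\bm{y}}_{k+1}$ uses $\bm{W}^{out}_k$ but the state $\bm{r}_{k+1}$, which is why the exponent is $a^{k+1}$.
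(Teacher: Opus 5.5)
Your proposal is correct and follows the paper's proof: the shared readout gives $\|\hat{\bm{y}}_{k+1}-\hat{\bm{y}}_{k+1}'\|\leq\bar R\|\bm{r}_{k+1}-\bm{r}_{k+1}'\|$, and applying \eqref{eq:diss_bound} at index $k+1$ finishes the argument. The detour through singular-value clipping is harmless but unnecessary, because $\|\bm{W}^{out}_k\|\leq\bar R$ is already a hypothesis and a projection onto a set lands in that set by definition.
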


\begin{proof}
Since $\hat{\bm{y}}_{k+1}=\bm{W}^{out}_k\bm{r}_{k+1}$, $\|\hat{\bm{y}}_{k+1}-\hat{\bm{y}}_{k+1}'\|\leq\bar R\|\bm{r}_{k+1}-\bm{r}_{k+1}'\|$, and the result follows by applying \eqref{eq:diss_bound} at index $k+1$.
\end{proof}

\section{Experiments}

LoRA-RC is evaluated against two baselines on the Lorenz-63 system
$\dot{x} = 10(y-x), 
\dot{y} = x(\rho-z)-y,
\dot{z} = xy-\tfrac{8}{3}z,$
integrated with Euler step $\Delta t=0.01$ and independent additive Gaussian noise $\sigma_w=0.1$ on each coordinate.
The parameter $\rho$ changes abruptly from $28$ to $40$ at step $k=800$, an unannounced regime shift such as payload change, plant-parameter drift, or sensor bias in CPS.
The predictor maps $\bm{s}_k=[x_k,y_k,z_k]^\top$ to $\bm{y}_{k+1}\in\mathbb{R}^3$, with the reservoir pretrained on $T_{\rm train}=700$ steps under $\rho=28$.
Three methods were compared: (i)~\textit{fixed RC}, reservoir and readout frozen after pretraining; (ii)~\textit{adaptive RC}, online readout update~\eqref{eq:readout_update} with the reservoir frozen ($\bm{M}_k\equiv 0$); and (iii)~\textit{LoRA-RC} (proposed), jointly updating the readout and the low-rank core.
All methods shared the base reservoir: $n=200$ neurons, spectral-norm bounds $\kappa_0=0.60$ and $\kappa=0.85$, leaking rate $\alpha=0.30$, and readout learning rate $\eta_R=2\times10^{-3}$.
LoRA-RC used rank $r=5$ ($2nr=2000$ offline basis parameters, $r^2=25$ online), core learning rate $\eta_M=0.04$, leakage $\lambda_M=60$, filter coefficient $\beta=0.05$, and projection radius $\rho_M=0.25$.
The bases $\bm{U},\bm{V}$ were computed by the data-informed procedure of Algorithm~\ref{alg:offline} from two drift regimes, $\rho\in\{33,40\}$.
Rebuilding them from $\rho\in\{33,36\}$, leaving the deployment regime unseen, gives post-drift RMSE $0.620\pm0.138$, cf.\ Table~\ref{tab:results}(b). The experiment does not demonstrate generalization to wider drift regimes.

\vspace{-3mm}
\begin{figure}[ht]
    \centering
    \includegraphics[width=0.95\columnwidth]{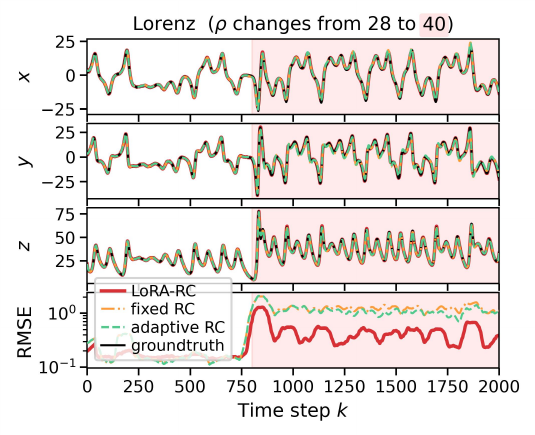}
    \vspace{-1mm}
    \caption{Lorenz state trajectories (panels 1--3) and smoothed prediction RMSE (panel 4) for fixed RC, adaptive RC, and LoRA-RC. Light red shading marks the post-drift regime ($\rho$ changes from 28 to 40).}
    \label{fig:lorenz_combined}
\end{figure}

In Fig.~\ref{fig:lorenz_combined}, the three methods track similarly before the drift; after it, fixed and adaptive RC detach from the ground truth with persistently elevated RMSE, while LoRA-RC follows the shifted attractor at a substantially lower level.
Table~\ref{tab:results}(a) quantifies both regimes. The post-drift reduction
is $56\%$ over fixed RC and $51\%$ over adaptive RC, with the lowest pre-drift error as well, so joint adaptation of the small core offers a meaningful advantage over readout adaptation alone.

With 20 random seeds, Table~\ref{tab:results}(b) reports mean $\pm$ std.
Removing the projection increases post-drift RMSE more than $40\times$, with or without the filter. Removing only the filter leaves it $5.6\times$ higher, so the filter aids accuracy rather than the certificate. Post-drift RMSE falls monotonically across $r\in\{1,2,5,10,20\}$, from $1.11\pm0.18$ to $0.41\pm0.11$, at up to $16\times$ the default's online parameters; $r=5$ is a favorable efficiency-accuracy point, not a proven optimum.
Sweeping $\rho_M$, $\beta$, $\eta_M$, and $\lambda_M$ one at a time, with the others at their Table~\ref{tab:results}(a) defaults and the same 20 seeds, gives the post-drift RMSE ranges in Table~\ref{tab:results}(c). The projection radius $\rho_M$ and core learning rate $\eta_M$ are most robust, staying in a narrow band across $7\times$ and $16\times$ ranges. The filter coefficient $\beta$ is most sensitive, increasing RMSE more than $5\times$ as $\beta \to 1$, consistent with the no-filter ablation. Large leakage $\lambda_M$ degrades performance through over-regularization.
For every parameter the default lies within, though not always at, the best-performing region.

\section{Conclusions}

LoRA-RC was developed for online reservoir adaptation in which the contraction bound on the recurrent matrix is a per-step invariant under every online update.
Building on this invariant, a uniform $\delta$ISS bound holding along every update sequence was proved with explicit contraction rate $a=(1-\alpha)+\alpha\kappa$.
Comparative simulations on the chaotic Lorenz system showed that LoRA-RC reduced post-drift RMSE by 56\% relative to fixed RC and by 51\% relative to adaptive RC, using only $r^2=25$ online recurrent parameters.
Future work will investigate Lyapunov-based conditions to enlarge the admissible set and extend the evaluation to higher-dimensional and real-world systems with closed-loop control tasks.


\bibliographystyle{IEEEtran}
\bibliography{z_refs}

@article{sontag1989smooth,
  author  = {Eduardo D. Sontag},
  title   = {Smooth Stabilization Implies Coprime Factorization},
  journal = {IEEE Transactions on Automatic Control},
  volume  = {34},
  number  = {4},
  pages   = {435--443},
  year    = {1989},
  doi     = {10.1109/9.28018}
}

@article{jiang2001discreteISS,
  author  = {Zhong-Ping Jiang and Yuan Wang},
  title   = {Input-to-State Stability for Discrete-Time Nonlinear Systems},
  journal = {Automatica},
  volume  = {37},
  number  = {6},
  pages   = {857--869},
  year    = {2001},
  doi     = {10.1016/S0005-1098(01)00028-0}
}

@article{angeli2002incremental,
  author  = {David Angeli},
  title   = {A {Lyapunov} Approach to Incremental Stability Properties},
  journal = {IEEE Transactions on Automatic Control},
  volume  = {47},
  number  = {3},
  pages   = {410--421},
  year    = {2002},
  doi     = {10.1109/9.989067}
}

@article{mayne2000mpc,
  author  = {David Q. Mayne and James B. Rawlings and Christopher V. Rao and Pierre O. M. Scokaert},
  title   = {Constrained Model Predictive Control: Stability and Optimality},
  journal = {Automatica},
  volume  = {36},
  number  = {6},
  pages   = {789--814},
  year    = {2000},
  doi     = {10.1016/S0005-1098(99)00214-9}
}

@techreport{jaeger2001echo,
  author      = {Herbert Jaeger},
  title       = {The Echo State Approach to Analysing and Training Recurrent Neural Networks},
  institution = {German National Research Center for Information Technology},
  number      = {GMD Report 148},
  year        = {2001}
}

@article{lukosevicius2009reservoir,
  author  = {Mantas Luko\v{s}evi\v{c}ius and Herbert Jaeger},
  title   = {Reservoir Computing Approaches to Recurrent Neural Network Training},
  journal = {Computer Science Review},
  volume  = {3},
  number  = {3},
  pages   = {127--149},
  year    = {2009},
  doi     = {10.1016/j.cosrev.2009.03.005}
}

@inproceedings{jaeger2002adaptive,
  author    = {Herbert Jaeger},
  title     = {Adaptive Nonlinear System Identification with Echo State Networks},
  booktitle = {Advances in Neural Information Processing Systems},
  year      = {2002}
}

@article{yang2019online,
  author  = {Cuili Yang and Junfei Qiao and Zohaib Ahmad and Kaizhe Nie and Lei Wang},
  title   = {Online Sequential Echo State Network with Sparse {RLS} Algorithm for Time Series Prediction},
  journal = {Neural Networks},
  volume  = {118},
  pages   = {32--42},
  year    = {2019},
  doi     = {10.1016/j.neunet.2019.05.006}
}

@article{gallicchio2011architectural,
  author  = {Claudio Gallicchio and Alessio Micheli},
  title   = {Architectural and {Markovian} Factors of Echo State Networks},
  journal = {Neural Networks},
  volume  = {24},
  number  = {5},
  pages   = {440--456},
  year    = {2011},
  doi     = {10.1016/j.neunet.2011.02.002}
}

@article{srinivasan2025boosting,
  author  = {Keshav Srinivasan and Dietmar Plenz and Michelle Girvan},
  title   = {Boosting Reservoir Computing with Brain-Inspired Adaptive Control of {E-I} Balance},
  journal = {Nature Communications},
  volume  = {16},
  number  = {1},
  pages   = {10212},
  year    = {2025}
}

@article{ye2025reservoir,
  author  = {Fan Ye and Arsen Abdulali and Kai-Fung Chu and Xiaoping Zhang and Fumiya Iida},
  title   = {Reservoir Controllers Design through Robot--Reservoir Timescale Alignment},
  journal = {Communications Engineering},
  volume  = {4},
  number  = {1},
  pages   = {81},
  year    = {2025}
}

@article{kim2026living,
  author  = {Seung Hyun Kim and Zhi Dou and Gaurav Upadhyay and Anay Pattanaik and Leo Maslov and Lav Varshney and John Beggs and Howard Gritton and Mattia Gazzola},
  title   = {Computing with Living Neurons: Chaos-Controlled Reservoir Computing with Knowledge Transplant},
  journal = {arXiv preprint arXiv:2604.02552},
  year    = {2026}
}

@article{yildiz2012revisiting,
  author  = {Izzet B. Yildiz and Herbert Jaeger and Stefan J. Kiebel},
  title   = {Re-visiting the Echo State Property},
  journal = {Neural Networks},
  volume  = {35},
  pages   = {1--9},
  year    = {2012},
  doi     = {10.1016/j.neunet.2012.07.005}
}

@article{manjunath2013echo,
  author  = {Gandhishankar Manjunath and Herbert Jaeger},
  title   = {Echo State Property Linked to an Input: Exploring a Fundamental Characteristic of Recurrent Neural Networks},
  journal = {Neural Computation},
  volume  = {25},
  number  = {3},
  pages   = {671--696},
  year    = {2013},
  doi     = {10.1162/NECO_a_00411}
}

@article{grigoryeva2018universal,
  author  = {Lyudmila Grigoryeva and Juan-Pablo Ortega},
  title   = {Universal Discrete-Time Reservoir Computers with Stochastic Inputs and Linear Readouts Using Non-Homogeneous State-Affine Systems},
  journal = {Journal of Machine Learning Research},
  volume  = {19},
  number  = {24},
  pages   = {1--40},
  year    = {2018}
}

@inproceedings{dong2022asymptotic,
  author    = {Jonathan Dong and Erik B{\"o}rve and Mushegh Rafayelyan and Michael Unser},
  title     = {Asymptotic Stability in Reservoir Computing},
  booktitle = {International Joint Conference on Neural Networks},
  pages     = {1--8},
  year      = {2022},
  doi       = {10.1109/IJCNN55064.2022.9892302}
}

@article{bugliari2019mpc,
  author  = {Luca {Bugliari Armenio} and Enrico Terzi and Marcello Farina and Riccardo Scattolini},
  title   = {Model Predictive Control Design for Dynamical Systems Learned by Echo State Networks},
  journal = {IEEE Control Systems Letters},
  volume  = {3},
  number  = {4},
  pages   = {1044--1049},
  year    = {2019},
  doi     = {10.1109/LCSYS.2019.2920720}
}

@article{williams2026reservoir,
  author  = {Jan P. Williams and J. Nathan Kutz and Krithika Manohar},
  title   = {Reservoir Computing for System Identification and Model Predictive Control},
  journal = {Neural Networks},
  volume  = {202},
  pages   = {109031},
  year    = {2026}
}

@article{inoue2025reservoir,
  author  = {Daisuke Inoue and Tadayoshi Matsumori and Gouhei Tanaka and Yuji Ito},
  title   = {Reservoir Predictive Path Integral Control for Unknown Nonlinear Dynamics},
  journal = {arXiv preprint arXiv:2509.03839},
  year    = {2025}
}

@article{chen2022calibrated,
  author  = {Yeyuge Chen and Yu Qian and Xiaohua Cui},
  title   = {Time Series Reconstructing Using Calibrated Reservoir Computing},
  journal = {Scientific Reports},
  volume  = {12},
  number  = {1},
  pages   = {16318},
  year    = {2022}
}

@inproceedings{neglia2026rescp,
  author    = {Roberto Neglia and Andrea Cini and Michael M. Bronstein and Filippo Maria Bianchi},
  title     = {{RESCP}: Reservoir Conformal Prediction for Time Series Forecasting},
  booktitle = {International Conference on Learning Representations},
  year      = {2026}
}

@article{lu2017reservoir,
  author  = {Zhixin Lu and Jaideep Pathak and Brian Hunt and Michelle Girvan and Roger Brockett and Edward Ott},
  title   = {Reservoir Observers: Model-Free Inference of Unmeasured Variables in Chaotic Systems},
  journal = {Chaos: An Interdisciplinary Journal of Nonlinear Science},
  volume  = {27},
  number  = {4},
  pages   = {041102},
  year    = {2017}
}

@article{tavakoli2026timedelay,
  author  = {S. Kamyar Tavakoli and J{\'e}r{\'e}mie Lefebvre and Andr{\'e} Longtin},
  title   = {Time-Delay Reservoir for Signal Demixing Using {Kalman} Weight Updates in Fixed Point and Limit Cycle Regimes},
  journal = {Scientific Reports},
  year    = {2026}
}

@article{bonassi2021gru,
  author  = {Fabio Bonassi and Marcello Farina and Riccardo Scattolini},
  title   = {On the Stability Properties of Gated Recurrent Units Neural Networks},
  journal = {Systems \& Control Letters},
  volume  = {157},
  pages   = {105049},
  year    = {2021},
  doi     = {10.1016/j.sysconle.2021.105049}
}

@article{damico2024incremental,
  author  = {William {D'Amico} and Alessio {La Bella} and Marcello Farina},
  title   = {An Incremental Input-to-State Stability Condition for a Class of Recurrent Neural Networks},
  journal = {IEEE Transactions on Automatic Control},
  volume  = {69},
  number  = {4},
  pages   = {2221--2236},
  year    = {2024},
  doi     = {10.1109/TAC.2023.3327937}
}

@article{bonassi2024nmpc,
  author  = {Fabio Bonassi and Alessio {La Bella} and Marcello Farina and Riccardo Scattolini},
  title   = {Nonlinear {MPC} Design for Incrementally {ISS} Systems with Application to {GRU} Networks},
  journal = {Automatica},
  volume  = {159},
  pages   = {111381},
  year    = {2024},
  doi     = {10.1016/j.automatica.2023.111381}
}

@article{labella2025regional,
  author  = {Alessio {La Bella} and Marcello Farina and William {D'Amico} and Luca Zaccarian},
  title   = {Regional Stability Conditions for Recurrent Neural Network-Based Control Systems},
  journal = {Automatica},
  volume  = {174},
  pages   = {112127},
  year    = {2025},
  doi     = {10.1016/j.automatica.2025.112127}
}

@inproceedings{hu2022lora,
  author    = {Edward J. Hu and Yelong Shen and Phillip Wallis and Zeyuan Allen-Zhu and Yuanzhi Li and Shean Wang and Lu Wang and Weizhu Chen},
  title     = {{LoRA}: Low-Rank Adaptation of Large Language Models},
  booktitle = {International Conference on Learning Representations},
  year      = {2022}
}

@inproceedings{zhang2023adalora,
  author    = {Qingru Zhang and Minshuo Chen and Alexander Bukharin and Nikos Karampatziakis and Pengcheng He and Yu Cheng and Weizhu Chen and Tuo Zhao},
  title     = {{AdaLoRA}: Adaptive Budget Allocation for Parameter-Efficient Fine-Tuning},
  booktitle = {International Conference on Learning Representations},
  year      = {2023}
}

@inproceedings{zou2025flylora,
  author    = {Heming Zou and Yunliang Zang and Wutong Xu and Yao Zhu and Xiangyang Ji},
  title     = {{FlyLoRA}: Boosting Task Decoupling and Parameter Efficiency via Implicit Rank-Wise Mixture-of-Experts},
  booktitle = {Advances in Neural Information Processing Systems},
  year      = {2025}
}

@inproceedings{luo2026keeplora,
  author    = {Mao-Lin Luo and Zi-Hao Zhou and Yi-Lin Zhang and Yuanyu Wan and Tong Wei and Min-Ling Zhang},
  title     = {{KeepLoRA}: Continual Learning with Residual Gradient Adaptation},
  booktitle = {International Conference on Learning Representations},
  year      = {2026}
}

\end{document}